\theoremstyle{plain}
\newtheorem{theorem}{Theorem}[section]
\newtheorem{lemma}[theorem]{Lemma}
\theoremstyle{definition}
\theoremstyle{remark}
\newcommand{\Rnums}{\mathbb{R}}
\newcommand{\Nnums}{\mathbb{N}}
\newcommand{\Hspace}{\mathcal{H}}
\newcommand{\norm}[1]{\left\lVert#1\right\rVert}
\newcommand{\Ex}[2]{\mathop{\mathbb{E}}_{#1}\left[#2\right]}
\renewcommand{\P}[1]{\mathbb{P}\left(#1\right)}
\newcommand{\empdist}{\norm{\hat{\mu}_j - \hat{\mu}_i}}
\newcommand{\emptoreal}[1]{\norm{\mu_{#1} - \hat{\mu}_{#1}}}
\newcommand{\empvar}[1]{\hat{\mathcal{V}}_{#1}}
\newcommand{\event}[1]{\left\{#1\right\}}
\icmltitlerunning{Nonparametric Kernel Clustering with Bandit Feedback}
\begin{document}

% Please
\onecolumn 
% \twocolumn[
\icmltitle{Nonparametric Kernel Clustering with Bandit Feedback}

\icmlsetsymbol{equal}{*}

\begin{icmlauthorlist}
\icmlauthor{Victor Thuot}{equal,xxx}
\icmlauthor{Sebastian Vogt}{equal,yyy}
\icmlauthor{Debarghya Ghoshdastidar}{yyy}
\icmlauthor{Nicolas Verzelen}{xxx}
\end{icmlauthorlist}

\icmlaffiliation{yyy}{Technical University of Munich, Munich, Germany}
\icmlaffiliation{xxx}{INRAE, Mistea, Institut Agro, Univ Montpellier, Montpellier, France}

\icmlcorrespondingauthor{Victor Thuot}{victor.thuot@inrae.fr}

\icmlkeywords{
  stochastic bandits,
  pure exploration,
  clustering,
  nonparametric methods,
  distributional clustering,
  kernel methods,
  reproducing kernel Hilbert spaces (RKHS),
  maximum mean discrepancy (MMD)
}

\vskip 0.3in
% ]

\printAffiliationsAndNotice{\icmlEqualContribution} % otherwise use the standard text.

\begin{abstract}
Clustering with bandit feedback refers to the problem of partitioning a set of items, where the clustering algorithm can sequentially query the items to receive noisy observations. 
The problem is formally posed as the task of partitioning the arms of an $N$-armed stochastic bandit according to their underlying distributions, grouping two arms together if and only if they share the same distribution, using samples collected sequentially and adaptively.
This setting has gained attention in recent years due to its applicability in recommendation systems and crowdsourcing. 
Existing works on clustering with bandit feedback rely on a strong assumption that the underlying distributions are sub-Gaussian. As a consequence, the existing methods mainly cover settings with linearly-separable clusters, which has little practical relevance.

We introduce a framework of ``nonparametric clustering with bandit feedback'', where the underlying arm distributions are not constrained to any parametric, and hence, it is applicable for active clustering of real-world datasets.
We adopt a kernel-based approach, which allows us to reformulate the nonparametric problem as the task of clustering the arms according to their kernel mean embeddings in a reproducing kernel Hilbert space (RKHS). Building on this formulation, we introduce the KABC algorithm with theoretical correctness guarantees and analyze its sampling budget. We introduce  a notion of signal-to-noise ratio for this problem that depends on the maximum mean discrepancy (MMD) between the arm distributions and on their variance in the RKHS. Our algorithm is adaptive to this unknown quantity: it does not require it as an input yet achieves instance-dependent guarantees. 
%While our nonparametric method applies to a broad class of distributions, we show that, in the canonical Gaussian setting, its guarantees are comparable to the optimal rates established in \cite{thuot2024active}.

\end{abstract}

\section{Introduction}\label{section:introduction}

We consider a non-parametric instance of the so-called \emph{clustering with bandit feedback} problem introduced in \cite{yang2024optimal}
and \cite{thuot2024active}. In this pure exploration problem, the goal is to partition a set of unknown distributions, from which we can collect samples, and to provide guarantees on the partition returned by the learner. 
This problem captures various contemporary settings where data are collected sequentially and in an adaptive manner. In digital marketing, online platforms repeatedly interact with users and must quickly discover specific groups of customers (market segment)s so as to personalize recommendations while limiting costly feedback collection. Beyond customer segmentation, our non-parametric, kernel-based formulation is well suited to modern high-dimensional or complex data for which parametric assumptions are often untenable, for example when clustering noisy biological or medical signals in adaptive trials. See \cite{yang2024optimal} for further applications, e.g., in medical trials.

\paragraph{Clustering with bandit feedback.} 

We adopt the stochastic multi-armed bandit model (see, e.g., \citealp{bubeck2012regret,lattimore2020bandit}), in which a learner interacts sequentially with an unknown environment with $N\in \mathbb{N}$ arms. 

Each arm $i$ is associated with an unknown distribution $\nu_i$ supported on a space $\mathcal{X}$. 
At each time step, the algorithm chooses an arm and observes a data sampled from its associated distribution. The algorithm interacts with the environment until a (random) stopping time $\tau$-- which it chooses. 
We consider the active clustering problem as in \cite{yang2024optimal,thuot2024active},  we assume that there exists an underlying partition of the arms into $K$ groups such that two arms are in the same group if and only if they are associated with the same distribution. The learner's objective is to exactly recover this partition with probability of error at most a prescribed $\delta$, in which case the algorithm is said to be \emph{$\delta$-PAC}. Since data collection is costly, the goal is to design $\delta$-PAC algorithms, that make as few observations as possible. 

\paragraph{Non-parametric formulation.} In \citet{yang2024optimal} and \citet{thuot2024active}, the authors  restrict the possible arm distributions to sub-Gaussian distributions and cluster the arms according to their means in the original $d$-dimensional space $\mathbb{R}^d$. Our work aims to develop algorithms in the setting where no strong assumptions are imposed on the arm distributions, turning the problem into a \emph{non-parametric} one. 

Rather than working directly in the original space $\mathcal{X}$, we map distributions into  a reproducing kernel Hilbert space (RKHS) $\mathcal{H}$ and represent each arm $i$ through its kernel mean embedding (KME) in that space. Our kernel-based formulation allows us to drop assumptions in the original space: we only assume that the kernel $g$ is bounded, characteristic and translation invariant. Importantly, the characteristic property of $g$ implies that the non-parametric problem of clustering the arms according to their distributions is equivalent, in the RKHS, to clustering the arms according to their KMEs, which allows us to leverage kernel methods.
Recently, the maximum mean discrepancy (MMD) have been extensively used for comparing and testing distributions in RKHS -- see e.g. \cite{gretton2012kernel,muandet2017kernel}, we similarly use the MMd in our method. 

\paragraph{Contributions.}
We introduce \emph{Kernel Active Bandit Clustering} (KABC, Algorithm~\ref{algo:KABC}), a new sequential and adaptive algorithm for clustering with bandit feedback.
First, KABC is \emph{$\delta$-PAC}: it outputs the correct partition of the arms with error probability at most $\delta$.
Second, it leverages state-of-the-art concentration inequalities for empirical KMEs \cite{wolfer2025variance} to adapt simultaneously to the unknown MMD between groups and to the unknown RKHS variances (see~\eqref{eq:variance}).

Our main theoretical contribution is Theorem~\ref{theorem:firstAdaptive}, which provides a non-asymptotic, high-probability upper bound budget of KABC and identifies a variance-aware signal-to-noise ratio governing the difficulty of the problem. At a high level, KABC is an adaptive algorithm that, at each iteration, runs variance-aware kernel two-sample tests based on empirical MMD and empirical variances for all pairs of arms, and increases the sampling budget until all tests can reliably decide whether two arms belong to the same group or not.

\paragraph{Related work on clustering with bandit feedback problems.}
The clustering with bandit feedback problem (CBP) has attracted increasing attention in recent years \cite{yang2024optimal,thuot2024active,chandran2025online,yavas2025general,graf2025clustering}.  The problem was first formalized by \citet{yang2024optimal}, who consider a parametric setting where arms are partitioned according to their $d$-dimensional means and the arm distributions are Gaussian. They introduce the BOC algorithm, a $\delta$-PAC procedure based on the Track-and-Stop method of \citet{garivier2016optimal}, and establish that an expected budget is asymptotically optimal in the regime $\delta \to 0$.  In turn, \citet{thuot2024active} study the same parametric CBP and provide a non-asymptotic characterization of the optimal complexity, combining tools from bandit pure exploration with high-dimensional hypothesis testing. A variant of the problem is considered by \cite{chandran2025online}, where the target partition is defined as the single-linkage clustering of the arm means. \citet{yavas2025general} study a family of distribution-matching problems, that encompasses the CBP under the assumption that the arm distributions lies on a finite alphabet. In a different direction, \cite{graf2025clustering} study a similar clustering problem, but the learner is only able to sample partial information on each arm. All these works rely on strong parametric or distributional assumptions (e.g., Gaussian or sub-Gaussian arms, finite alphabets). In contrast, we consider here the active clustering problem in a genuinely non-parametric setting by working in an RKHS and clustering arms according to their kernel mean embeddings, under mild assumptions on the kernel rather than on the original distributions.

\paragraph{Related work on kernel methods.}
Recent advances in kernel methods have shown that kernel mean embeddings (KMEs) into RKHSs provide a powerful and versatile framework for statistical learning on distributions \cite{smola2007hilbert,muandet2017kernel}. This framework has led to a wide range of applications, including kernel two-sample tests \cite{gretton2007akernel,gretton2012kernel}, independence testing \cite{gretton2007bkernel}, and many other distributional inference tasks; see \citet{muandet2017kernel,berlinet2011reproducing} for comprehensive reviews. From a more geometric point of view, \citet{sriperumbudur2011universality} and \citet{sriperumbudur2010hilbert} study when RKHS embeddings induce metrics on probability measures and characterize universality and characteristic kernels.

Within this framework, kernel two-sample tests compare empirical KMEs of two samples using the maximum mean discrepancy (MMD) as a test statistic, yielding nonparametric tests that avoid explicit density estimation. \cite{tolstikhin2017minimax} study the minimax estimation of KMEs via their empirical counterparts and show that, for bounded kernels, the optimal rate depends only on kernel properties and not on the underlying distributions. \cite{tolstikhin2016minimax} extend this perspective to the minimax estimation of the MMD. More recently, \cite{wolfer2025variance} derived state-of-the-art, variance-aware concentration bounds for the MMD between true and empirical KMEs. In particular, for distribution with small variance in the RKHS, the bounds from \cite{wolfer2025variance} achieves better rates. Our analysis builds directly on these variance-aware KME and MMD concentration inequalities.

\paragraph{Related work on Kernel clustering.}
Another related line of work is the fruitful development of kernel methods for clustering. Kernel-based methods such as kernel $k$-means \cite{dhillon2004kernel} and kernel spectral clustering \cite{ng2001spectral} are widely used in practice, especially when cluster geometry is complex. More recently, \cite{vankadara2021recovery} established separability conditions under which kernel-based clustering can recover the underlying true partition under non-parametric mixture models. In particular, their analysis highlights that these conditions can be expressed in terms of MMD between components. 
In a complementary line, kernel $K$-means has also been proposed for clustering distributional data by applying $K$-means directly to KMEs in RKHS and using MMD as the distance between probability measures \cite{baillo2025kernel}. These works focus on \emph{batch} clustering from a fixed sample. In contrast, we address an \emph{active}, bandit-style setting where the learner adaptively decides which distributions (arms) to sample in order to recover the clustering.

\paragraph{Outline}
Section~\ref{sec:setting} introduces the problem setting and notation. 
Section~\ref{sec:algorithm} presents the KABC algorithm and our main upper bound on its sampling complexity. 

Section~\ref{sec:discussion} concludes with further comments and perspectives.
The proof of our main theoretical contribution can be found in Appendix~\ref{appendix:proof_UB}.

\section{Setting and notation}\label{sec:setting}

\paragraph{Kernel and RKHS.}

Let  $\mathcal{X}$ be a separable topological space.
Let $g : \mathcal{X}  \times \mathcal{X} \to \Rnums$ be a continuous, positive definite kernel on $\mathcal{X}$.
The kernel $g$ induces a reproducing kernel Hilbert space (RKHS) $(\Hspace, \langle \cdot, \cdot \rangle)$ and a feature map
\(\phi : \mathcal{X} \to \Hspace\) such that
\begin{equation}\label{eq:kerneldot}
    \forall x, y \in \mathcal{X} \colon\quad
    g(x, y) = \langle \phi(x), \phi(y) \rangle \enspace.
\end{equation}

We assume that $g$ is bounded, in the sense that $\sup_{x \in \mathcal{X}} g(x,x) < +\infty$.
We define the supremum and range of $g$ as
\begin{multicols}{2}
\noindent
\begin{equation*}
    \bar{g} \coloneqq \sup_{x, y \in \mathcal{X}} g(x, y) \enspace,
\end{equation*}\columnbreak
\begin{equation*}
    \tilde{g} \coloneqq \sup_{x, y \in \mathcal{X}} g(x, y) - \inf_{x, y \in \mathcal{X}} g(x, y) \enspace.
\end{equation*}
\end{multicols}

For a distribution $\nu$ on $\mathcal{X}$, its kernel mean embedding (KME) is defined as $\mu_{\nu} \coloneqq \Ex{X \sim \nu}{\phi(X)} \in \Hspace$,  where $\mu_{\nu}$ is  a Bochner integral (\cite{berlinet2011reproducing}) . 
A kernel $g$ is called \emph{characteristic} if the KME map
$\mu : \nu \mapsto \mu_{\nu}$ is injective.
Finally, $g$ is \emph{translation invariant} if there exists a function
$\Psi : \mathcal{X} \to \Rnums$ such that
$\forall x, x' \in \mathcal{X} \colon g(x, x') = \Psi(x - x')$ \cite{wolfer2025variance}.

For more background on RKHSs and kernel mean embeddings, see, e.g., \cite{gretton2012kernel}, \cite{muandet2017kernel}, or \cite{berlinet2011reproducing}.

\paragraph{Model and unknown partition.}

We consider an $N$-armed stochastic bandit problem with $N \ge 2$ arms, indexed by $[N] = \{1,\dots,N\}$, where arm $i$ is associated with an unknown distribution $\nu_i$ on $\mathcal{X}$. The environment is $\nu = \{\nu_1,\dots,\nu_N\}$. For any $i\in[N]$, we denote as 
\begin{equation}\label{def:KME}
    \mu_i:=\Ex{X \sim \nu_i}{\phi(X)} \in \Hspace \enspace .
\end{equation} 

We assume that there exists a partition $\mathcal{C}^*$ of $[N]$ such that two arms belong to the same group if and only if they share the same kernel mean embedding (KME), and we denote by $K$ the number of groups in $\mathcal{C}^*$.
For any partition $\mathcal{C}$ of $[N]$, we write $C(i)$ for the cluster containing $i$.
We say that $\mathcal{C}$ is \emph{correct} if it groups together and only together arms with the same KME:
\[
    \forall i \neq j \in [N] \colon\quad
    \mu_i = \mu_j \;\Longleftrightarrow\; C(i) = C(j) \enspace.
\]
As usual in clustering problems, the true partition $\mathcal{C}^*$ is only defined up to permutation of the groups. 
When the kernel is characteristic, it corresponds exactly to the problem of clustering the arms according to their underlying distributions.

\paragraph{Sequential strategies and $\delta$-PAC objective.}

We work in an adaptive, sequential setting, where an algorithm interacts with the bandit environment--i.e., collects samples--in order to recover the clustering (up to permutation of the groups).

A strategy collects data sequentially from the environment in the following way: at each time $t \ge 1$, it selects an arm $A_t \in [N]$ and observes $X_t \sim \nu_{A_t}$.
Let $\mathcal{F}_t = \sigma(A_1,X_1,\dots,A_t,X_t)$ denote the information available at time $t$.
A strategy is specified by:
\begin{enumerate}
    \item \emph{a selection rule} choosing the next arm $A_t$ based on the passed $\mathcal{F}_{t-1}$;
    \item \emph{a stopping time} $\tau$ with respect to $(\mathcal{F}_t)$ deciding when to stop;
    \item \emph{a recommendation rule} that outputs a clustering $\mathcal{C}_\tau$ based on $\mathcal{F}_\tau$.
\end{enumerate}

Given a confidence level $\delta \in (0,1]$ and a class of environments $\mathcal{E}$, we call a strategy $\delta$-PAC for this problem if
\[
    \forall \nu \in \mathcal{E} \colon\quad
    \P{\mathcal{C}_\tau \text{ is correct}} \;\ge\; 1-\delta \enspace.
\]

Our goal is to design $\delta$-PAC algorithms whose sampling complexity (the sampling budget) is as small as possible. We consider for the problem the class of problems with exactly $K$ nonempty groups, where the number of groups $K$ is known by the learner. 

We characterize this complexity in terms of three main factors: the confidence parameter $\delta$, the kernel $g$, and the environment $\nu$.
On the kernel side, we will use the bound $\bar{g}$; on the environment side we consider distribution-dependent quantities in the RKHS.
In particular, we measure separation between arms via the maximum mean discrepancy $\|\mu_i - \mu_j\|$. Then, following \cite{wolfer2025variance}, we define the RKHS variance of arm $i$ as
\begin{equation}\label{eq:variance}
    \mathcal{V}_i^* \coloneqq \Ex{X \sim \nu_i}{\|\phi(X) - \mu_i\|^2} \enspace,
\end{equation}
which we interpret as a noise level. 

Having specified the model, and the $\delta$-PAC objective, we now turn to the design of our algorithm.

\section{Algorithm}\label{sec:algorithm}

We introduce KABC (Kernel Active Bandit Clustering), an adaptive algorithm that recovers the $K$ true clusters defined by equality of kernel mean embeddings (KME). The algorithm does not require any knowledge of the gaps between clusters or of the variances in the RKHS. The pseudo-code is given in Algorithm~\ref{algo:KABC}. 

\paragraph{Kernel two-sample testing for clustering.}

At a high level, KABC reduces clustering to deciding, for every pair of arms, whether they share the same KME or not. This is exactly the setting of kernel two-sample testing, which we now recall and adapt to our bandit scenario. In particular, we follow the work of \cite{wolfer2025variance}, and we use a variance-aware bound on empirical KME. 

Consider two arms $i,j \in [N]$. The null hypothesis $H_0^{i,j}: \mu_i = \mu_j$ (same KME) is tested against $H_1^{i,j}: \mu_i \neq \mu_j$. Given two arms $i$ and $j$ and a per-arm budget $n\geq1$, we draw i.i.d. samples $(X^i_1,\dots,X^i_n) \sim \nu_i$ and $(X^j_1,\dots,X^j_n) \sim \nu_j$ and form the empirical KMEs
\[
    \hat{\mu}_i \coloneqq \frac{1}{n}\sum_{t=1}^{n} g(X^i_t,\cdot),
    \qquad
    \hat{\mu}_j \coloneqq \frac{1}{n}\sum_{t=1}^{n} g(X^j_t,\cdot).
\]
By the reproducing property, the squared empirical distance between $\hat{\mu}_i$ and $\hat{\mu}_j$ is computed using only the kernel:
\begin{equation}\label{def:empdist}
    \empdist^2
    = \frac{1}{n^2}\sum_{s,t=1}^{n}
      \big( g(X_s^i, X_t^i) - 2 g(X_s^i, X_t^j) + g(X_s^j, X_t^j)\big).
\end{equation}
As in kernel two-sample testing, we interpret $\empdist$ as an empirical maximum mean discrepancy (MMD) between the distributions of arms $i$ and $j$, and use it as a test statistic for the hypothesis $H_0^{i,j} : \mu_i = \mu_j$ versus $H_1^{i,j} : \mu_i \neq \mu_j$. 

Under $H_0^{i,j}$, $\empdist \to 0$; under $H_1^{i,j}$, $\empdist \to \|\mu_i-\mu_j\|_{\mathcal H} > 0$. We reject $H_0^{i,j}$ (declare arms $i,j$ in different clusters) if $\empdist > \mathcal{B}^{i,j}(n,\delta')$, where $\mathcal{B}^{i,j}(n,\delta')$ is a threshold derived from non-asymptotic concentration bounds. The comparison $\|\hat{\mu}_i-\hat{\mu}_j\|\leqslant \mathcal{B}^{i,j}(n,\delta')$ serves as a proxy to decide whether $(i,j)$ belong to the same group or not. 

Standard MMD tests use kernel-uniform bounds $\mathcal{B}^{i,j} \asymp \sqrt{\bar{g}\log(1/\delta)/n}$ \cite{tolstikhin2017minimax}. We instead leverage empirical RKHS variance estimators:
\begin{equation}\label{def:empvar}
    \empvar{i} \coloneqq \frac{1}{n-1} \sum_{t=1}^{n}\left( g(X^i_t, X^i_t) - \frac{1}{n}\sum_{s=1}^{n}g(X^i_t, X^i_s)\right).
\end{equation}

We use then as threshold 
\begin{equation}\label{def:threshold}
    \mathcal{B}^{i,j}(n,\delta')=\left(\sqrt{\hat{\mathcal{V}}_i} + \sqrt{\hat{\mathcal{V}}_j}\right)\sqrt{2\frac{\log\frac{8(N^2 - N)}{\delta'}}{n}} + \frac{32}{3}\sqrt{\tilde{g}}\frac{\log\frac{8(N^2 - N)}{\delta'}}{n}\enspace.
\end{equation}
Indeed,  Lemma~\ref{lemma:varaw} from \cite{wolfer2025variance} -- see also Appendix~\ref{appendix:proof_UB}- ensures that  
\[
\mathbb{P}\left[\bigl| \|\mu_i-\mu_j\| - \empdist \bigr|
    \le \mathcal{B}^{i,j}(n,\delta')\right] \geq  1-\frac{\delta'}{N^2-N} \enspace . 
\]

\paragraph{Graph-based clustering.}
To transform these pairwise kernel two-sample tests into a clustering procedure, we introduce a graph-based subroutine that we call \texttt{CLUSTER}$(n,\delta')$ that takes as input a fixed per-arm sampling budget $n$ and a confidence parameter $\delta'\in(0,1]$. The subroutine \texttt{CLUSTER}$(n,\delta')$ simultaneous performs  two-sample testing across all $N(N-1)/2$ pairs via a graph-based construction (Algorithm~\ref{alg:cluster}). For any $\{i,j\}$, the edge $\{i,j\}$ is added to $G=(V,E)$ whenever $H_0^{i,j}$ is not rejected, that is $\empdist \le \mathcal{B}^{i,j}$. The connected components of $G$ form the clustering $\mathcal{C}$. 

The \textit{type I error} (false splits within clusters) will be controlled by construction: arms with identical KME remain connected with high probability.
\textit{Type II errors} (false merges across clusters) are controlled by the MMD concentration when $n$ is large enough. Thus, \texttt{CLUSTER} either returns the true partition or fewer than $K$ clusters.

\paragraph{Active choice of the Adaptive procedure 
}

The minimal $n$ ensuring reliable two-sample testing across all pairs depends on unknown gaps $\|\mu_i-\mu_j\|$ and variances $\mathcal{V}_i^*$. Hence, our main procedure  KABC (Algorithm~\ref{algo:KABC}) applies the so-called "doubling trick". 
At iteration $k\ge 1$, define the per-arm sampling budget as
\[
    n_k = \Big\lceil 2^k\log\!\Big(\frac{8(N^2-N)}{\delta_k}\Big)\Big\rceil, \quad
    \delta_k = \frac{\delta}{4k^2}.
\]
The algorithm calls the subroutine \texttt{CLUSTER}$(n_k,\delta_k)$ until $|\mathcal{C}_k|=K$, then outputs $\mathcal{C}_k$. Observe that the true number of clusters $K$ is known by the learner, and is only used in the stopping condition. The quadratic decay of the confidence $\delta_k$  ensures $\sum_k \delta_k \le \delta$.

We establish  in Theorem~\ref{theorem:firstAdaptive} that  KABC is $\delta$-PAC, terminates almost surely, and that its sampling complexity $\tau$ satisfies
\[
    \tau \lesssim \frac{N}{s_*^2}\log\!\Big(\frac{N}{\delta}\Big),
\]
where $s_*^2$ defined in~\eqref{def:SNR} is interpreted as a signal-to-noise ratio. % 

\begin{algorithm}[ht]
  \caption{CLUSTER($n,\delta'$) \textcolor{cyan}{\quad Clustering with fixed budget $N \times n$}}
  \label{alg:cluster}
  \begin{algorithmic}[1]
    \STATE $sample \leftarrow \textsc{Sample}(n)$ \label{line:sample} \hfill \textcolor{cyan}{// Sample each arm $n$ times}
    \STATE $(\empdist)_{i\neq j \in [N]},\; (\hat{\mathcal{V}}_i)_{i\in [N]} \leftarrow \textsc{ComputeStatistics}(sample)$ \hfill \textcolor{cyan}{// See~\eqref{def:empdist}\eqref{def:empvar}}
    \STATE $(\mathcal{B}^{i,j}(n,\delta'))_{i\neq j \in [N]} \leftarrow \textsc{ComputeThreshold}(n,\delta')$ \hfill \textcolor{cyan}{// See~\eqref{def:threshold}} \label{line:Bk}
    \STATE $V \leftarrow [N]$ \label{line:graph1}
    \STATE $E \leftarrow \emptyset$
    \FOR{$i \neq j \in [N]$}
      \IF{$\empdist \leq \mathcal{B}^{i,j}(n,\delta')$}
        \STATE $E \leftarrow E \cup \{\{i, j\}\}$
        \hfill \textcolor{cyan}{// Do not reject $H_0^{i,j}$; connect $i$ and $j$}
      \ENDIF
    \ENDFOR
    \STATE $G \leftarrow (V, E)$
    \STATE \textbf{return} $\textsc{GetConnectedComponents}(G)$ \label{line:graph2}
  \end{algorithmic}
\end{algorithm}

\begin{algorithm}[ht]
  \caption{KABC$(\delta, K)$ \textcolor{cyan}{\quad Kernel Active Bandit Clustering}}
  \label{algo:KABC}
  \begin{algorithmic}[1]
    \FOR{$k = 1,2,\ldots$}
      \STATE $\delta_k \leftarrow \frac{\delta}{4k^2}$ \hfill \textcolor{cyan}{// Per-iteration error}
      \vspace{3pt}
      \STATE $n_k \leftarrow \left\lceil 2^k \log\frac{8(N^2 - N)}{\delta_k} \right\rceil$ \label{line:exp}
      \hfill \textcolor{cyan}{// Per-arm sampling budget}
      \STATE $\mathcal{C}_k \leftarrow \textsc{Cluster}(n_k,\delta_k)$
    \IF{$|\mathcal{C}_k| = K$} \label{line:checkK}
    \STATE \textbf{return} $\mathcal{C}_k$ \hfill
    \textcolor{cyan}{// Stop when $K$ clusters are identified}
\ENDIF
    \ENDFOR
  \end{algorithmic}
\end{algorithm}

\begin{theorem}[KABC$(\delta, K)$ is $\delta$-PAC]\label{theorem:firstAdaptive}
Let $g$ be a continuous, positive definite, characteristic, translation invariant, bounded kernel, and let $\delta \in (0,1]$.
Define the (variance-aware) signal-to-noise ratio
\begin{equation}\label{def:SNR}
    s_*^2(\nu) := \min_{\substack{i \neq j \in [N] \\ \mu_i \neq \mu_j}}
    \left(
        \frac{\|\mu_i - \mu_j\|^2}{\mathcal{V}_i^*\vee\mathcal{V}_j^*}
        \wedge
        \frac{2 \|\mu_i - \mu_j\|}{\sqrt{\bar{g}}}
    \right) \enspace.
\end{equation}
Then:
\begin{enumerate}
    \item Algorithm~\ref{algo:KABC}, KABC$(\delta, K)$, is $\delta$-PAC, i.e., it outputs the true clustering with probability at least $1-\delta$;
    \item with probability at least $1 - \delta$, the total budget $\tau$ of KABC$(\delta, K)$ satisfies
    \begin{equation}\label{eq:bound_budget}
        \tau \leqslant 8N \cdot\left(\frac{128}{s_*^2}\vee 1\right)\cdot \left(\log\left(\frac{32(N^2 - N)k_*^2}{\delta} \right)\right) \enspace,
    \end{equation}
    where $k_* =\left\lceil \log_2\left(\frac{128}{s_*^2}\right)\right\rceil $
    is a logarithmic term independent of $\delta$. 
\end{enumerate}
\end{theorem}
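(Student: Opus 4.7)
The plan is to package everything into a single high-probability event and then argue deterministically. For each iteration $k\ge 1$, let
\[
\mathcal{E}_k \;=\; \bigcap_{i\ne j\in[N]} \Bigl\{\bigl|\|\mu_i-\mu_j\|-\|\hat\mu_i-\hat\mu_j\|\bigr|\le \mathcal{B}^{i,j}(n_k,\delta_k)\Bigr\},
\]
where the empirical KMEs and variances are those computed inside the call \texttt{CLUSTER}$(n_k,\delta_k)$. A union bound over the $N(N-1)$ ordered pairs combined with Lemma~\ref{lemma:varaw} yields $\mathbb{P}(\mathcal{E}_k^c)\le \delta_k$, and since $\sum_{k\ge 1}\delta/(4k^2)\le \delta$, the event $\mathcal{E}:=\bigcap_k \mathcal{E}_k$ has probability at least $1-\delta$. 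All subsequent arguments are run pointwise on $\mathcal{E}$.

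\textbf{Step 2 ($\delta$-PAC).} On $\mathcal{E}$, whenever $\mu_i=\mu_j$ one has $\|\hat\mu_i-\hat\mu_j\|\le \mathcal{B}^{i,j}(n_k,\delta_k)$ at every iteration, so the edge $\{i,j\}$ is always added inside \texttt{CLUSTER} and arms with identical KME stay in the same connected component. Hence $\mathcal{C}_k$ is always a coarsening of $\mathcal{C}^*$, so $|\mathcal{C}_k|\le K$, and the stopping test $|\mathcal{C}_k|=K$ can fire only when $\mathcal{C}_k=\mathcal{C}^*$. This already gives the $\delta$-PAC guarantee.

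\textbf{Steps 3--4 (Stopping iteration and budget).} To bound $\tau$ I would identify the smallest $k$ for which \texttt{CLUSTER} already returns $\mathcal{C}^*$ on $\mathcal{E}$. A sufficient condition is $\|\mu_i-\mu_j\|>2\mathcal{B}^{i,j}(n_k,\delta_k)$ for every pair with $\mu_i\ne\mu_j$, since this combined with the concentration inequality forces $\|\hat\mu_i-\hat\mu_j\|>\mathcal{B}^{i,j}(n_k,\delta_k)$, so the edge is cut. Using $\sqrt{a}+\sqrt{b}\le 2\sqrt{a\vee b}$ in~\eqref{def:threshold}, writing $L_k:=\log(32(N^2-N)k^2/\delta)$, and invoking the definition~\eqref{def:SNR} of $s_*^2$, this reduces to the single requirement $n_k\ge C L_k/s_*^2$ for an absolute constant $C$. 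Since $n_k=\lceil 2^k L_k\rceil$ and $L_k$ is nondecreasing in $k$, it suffices to take $k\ge k_*=\lceil\log_2(128/s_*^2)\rceil$ after calibrating $C$. Summing the per-iteration cost then gives
\[
\tau \;\le\; N\sum_{k=1}^{k_*}(2^k L_k + 1) \;\le\; 2N\cdot 2^{k_*}L_{k_*}+N k_*,
\]
and substituting $2^{k_*}\le 256/s_*^2$ together with the expression for $L_{k_*}$ yields~\eqref{eq:bound_budget}; the $\vee 1$ term absorbs the trivial regime $s_*^2\ge 128$ where a single round of sampling already suffices.

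\textbf{Main obstacle.} The one delicate point I foresee is that $\mathcal{B}^{i,j}$ is formed from the \emph{empirical} variances $\hat{\mathcal{V}}_i$, whereas $s_*^2$ is defined through the population variances $\mathcal{V}_i^*$. To justify the replacement of $\hat{\mathcal{V}}_i$ by $\mathcal{V}_i^*$ in the sufficient condition of Step~3, I would either enlarge $\mathcal{E}$ with a Bernstein-type bound of the form $\hat{\mathcal{V}}_i\le 2\mathcal{V}_i^*+c\,\tilde{g}\,L_k/n_k$ (most naturally bundled into the statement of Lemma~\ref{lemma:varaw}), or fall back on the deterministic bound $\hat{\mathcal{V}}_i\le \bar{g}$ together with the $\bar g$-term of $s_*^2$. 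In either case the extra term is absorbed into the variance-free part of $\mathcal{B}^{i,j}$ and only the absolute constants in~\eqref{eq:bound_budget} are affected.
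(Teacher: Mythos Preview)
Your overall strategy is correct and essentially the one the paper follows: both rest on the observation that, on a high-probability concentration event, arms with identical KMEs are always linked inside \texttt{CLUSTER}, so any output with exactly $K$ components must equal $\mathcal{C}^*$; and both identify $k_*=\lceil\log_2(128/s_*^2)\rceil$ as the iteration by which all cross-cluster edges are cut. The paper organizes things slightly differently---it separates type-I and type-II error events $\mathcal{E}_{1,k},\mathcal{E}_{2,k}$ (Lemma~\ref{lemma:CLUSTER}) and only invokes the type-II bound at the single iteration $k_*$, whereas you package everything into one global event $\mathcal{E}=\bigcap_k\mathcal{E}_k$---but this is a matter of bookkeeping, not substance.

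You have also correctly isolated the one delicate point: the threshold $\mathcal{B}^{i,j}$ is built from the \emph{empirical} variances $\hat{\mathcal{V}}_i$, while $s_*^2$ involves the population variances $\mathcal{V}_i^*$. Your Option~(a) is precisely what the paper does: it enlarges the good event with the concentration $\sqrt{\hat{\mathcal{V}}_i}\le\sqrt{\mathcal{V}_i^*}+2\sqrt{2\tilde g\log(\cdot)/n_k}$ (Lemma~\ref{lemma:empVarBound}), which contributes only an additional $O(\sqrt{\tilde g}\,L_k/n_k)$ to the variance-free part of $\mathcal{B}^{i,j}$ and is indeed absorbed into the constant $128$.

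Your Option~(b), however, is \emph{not} a valid fallback. Plugging the deterministic bound $\hat{\mathcal{V}}_i\lesssim\bar g$ into~\eqref{def:threshold} makes the leading term of $\mathcal{B}^{i,j}$ of order $\sqrt{\bar g\,L_k/n_k}$, so the condition $\|\mu_i-\mu_j\|>2\mathcal{B}^{i,j}$ forces $n_k\gtrsim \bar g\,L_k/\|\mu_i-\mu_j\|^2$. But the $\bar g$-part of $s_*^{-2}$ is only $\sqrt{\bar g}/(2\|\mu_i-\mu_j\|)$, so $n_k\ge 128\,L_k/s_*^2$ merely guarantees $n_k\gtrsim \sqrt{\bar g}\,L_k/\|\mu_i-\mu_j\|$; the two requirements differ by a factor $\sqrt{\bar g}/\|\mu_i-\mu_j\|$, which can be arbitrarily large. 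Hence Option~(b) would not yield~\eqref{eq:bound_budget}, and only Option~(a) (equivalently, the paper's Lemma~\ref{lemma:empVarBound}) closes the argument.
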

The proof is postponed to  Appendix~\ref{appendix:proof_UB}.

In Theorem~\ref{theorem:firstAdaptive}, Algorithm~\ref{algo:KABC} is shown to be \(\delta\)-PAC while adapting to the instance-specific quantity \(s_*\). We now comment on the sampling budget from Equation~\eqref{eq:bound_budget}.

First, note that \(N\) appears as an overall multiplicative factor in the budget. This is because every call to \textsc{CLUSTER} (Algorithm~\ref{alg:cluster}) allocates samples uniformly across arms, so each arm is queried the same number of times. Such a linear dependence in \(N\) is unavoidable when the inter-cluster separations are all of the same order.

In total, each arm receives \(\tau/N\) samples, which scales as \(s_*^{-2}\log(N/\delta)\). The quantity \(s_*^{-2}\) plays the role of a signal-to-noise ratio, matching the sample complexity needed to perform a nonparametric two-sample test based on the MMD with state-of-the-art procedures such as \cite{wolfer2025variance}. The minimum in the definition of $s_*$ reflects that, for every pair of arms $i,j$,  one must test the hypotheses $\mu_i=\mu_j$ versus $\mu_i\ne \mu_j$with error probability at most $\delta/(N^2-N)$,so that a union bound guarantees an exact clustering overall. 

The term \(s_*^{-2}\) also appears inside the logarithm through an additive contribution of order \(c\,\frac{N}{s_*^{2}}\log\log(s_*^{-2})\), reflecting the cost of adapting to the unknown value of \(s_*\).

Although the linear kernel is unbounded, consider the special case of Gaussian distribution with a linear kernel, where \(k(x,y)=\langle x,y\rangle\), and \(\nu_i=\mathcal{N}(\mu_i,\sigma_i^2 I_d)\) on \(\mathbb{R}^d\). To recover the bounded-kernel assumption,  restrict to truncated Gaussians supported on a compact subset of  $\mathcal{B}(0,\sqrt{\bar{g}})$. In this setting, our upper bound on the sampling budget scales as
\[
 \max_{\mu_i\neq\mu_j}\left(\frac{\sigma_i^2\vee\sigma_j^2}{\|\mu_i-\mu_j\|_2^2}\vee \frac{\sqrt{\bar{g}}}{\|\mu_i-\mu_j\|_2} \right)\times N\log(N/\delta).
\]

The bound from \cite{thuot2024active},  scales with 
\[
\max_{\mu_i\neq\mu_j}\frac{\sigma_i^2\vee\sigma_j^2}{\|\mu_i-\mu_j\|_2^2}\times \left(N\log(N/\delta)\vee\sqrt{dKN\log(N/\delta)} \right),
\]
and is optimal,  known to be optimal, at least in the canonical regime where the clusters have comparable sizes. When $\|\mu_i-\mu_j|\leq \sigma^2_j/\sqrt{\bar g}$ and the dimension dd is moderate, our bound is therefore analogous to this optimal Gaussian linear-kernel rate.

\section{Discussion}\label{sec:discussion}

\paragraph{Benefit of variance-aware bounds.}

In this manuscript, we relied on the variance-aware bounds of \cite{wolfer2025variance}. As an alternative, we could have used sub-Gaussian type bounds --see 
Proposition~A.1 in \cite{tolstikhin2017minimax}--. For that purpose, we would only need to replace that threshold $\mathcal{B}^{i,j}(n,\delta')$ in Algorithm~\ref{alg:cluster} by \(\sqrt{\bar{g}/n}\bigl(\sqrt{\log(8(N^2-N)/\delta)}+2\bigr)\). The modified algorithm would still be $\delta$-PAC, but its budget $\tau$ would now be bounded by $N\bar{g} [\min_{i\neq j}\|\mu_i-\mu_j\|]^{-2}\log(N/\delta)$. Since \(\mathcal{V}_i^*\leqslant \bar{g}\) for every arm \(i\), the variance-aware bounds of \cite{wolfer2025variance} are never worse and can substantially improve the budget whenever \(\mathcal{V}_i^* \ll \bar{g}\).

\paragraph{Unknown number of clusters.}
In this manuscript, we assume that the number of clusters is known to the learner, and our algorithm is adaptive to the unknown quantity \(s_*\), which plays the role of a signal-to-noise ratio for the problem. The number of clusters is only used in the stopping condition of the procedure. If a lower bound \(s_* \geqslant s_0\) is available, then, even without knowing \(K\), a single call \textsc{CLUSTER}\((n_0,\delta)\) to Algorithm~\ref{alg:cluster} with a per-arm sampling budget \(n_0 \asymp s_0^{-2}\log(N/\delta)\) yields a correct partition with high probability $1-\delta$. Observe that, when neither \(K\) nor \(s_*\) is known, the problem is ill-posed, since no algorithm can, in finite time, distinguish two arms that are arbitrarily close in MMD from two arms belonging to the same cluster.  

\paragraph{Computational complexity.}
The computational complexity of KABC is of the order $\log(N)\left[N^2+ N/s_*^2\right]$.  We could slightly reduce it to  $\log(N)\left[Nk+ N/s_*^2\right]$ by evaluating the quantities $\mathbf{1}\{\empdist \leq \mathcal{B}^{ij}\}$ in a sequential fashion as done e.g. in ~\cite{thuot2024active}. In this manuscript, we rather described Algorithm KABC for the sake of clarity.

\paragraph{Adaptivity}
The algorithm is adaptive to the unknown quantity \(s_*\). Leveraging classical techniques from bandit theory, it achieves a guarantee that scales as \(s_*^{-2}\), without any prior knowledge of the environment, except for the number of clusters. One can also define a non-adaptive variant, which takes \(s_*\) as an input parameter. More precisely, consider the subroutine \textsc{CLUSTER} (Algorithm~\ref{alg:cluster}) with confidence parameter \(\delta\) and per-arm sampling budget $n_* \;=\; 128\, s_*^{-2} \log\!\bigl(8(N^2-N)/\delta\bigr)$.
Then Lemma~\ref{lemma:CLUSTER} guarantees that the output of \(\textsc{CLUSTER}(n_*,\delta)\) is correct with probability at least \(1-\delta\). The total budget of this non-adaptive procedure is therefore
$\displaystyle c\, N\, s_*^{-2} \log\!\bigl(8(N^2-N)/\delta\bigr)$.
To the best of our knowledge, and using the MMD-based two-sample tests of \cite{wolfer2025variance}, this is the state-of-the-art sampling budget that ensures correct clustering of all arms with global error probability at most \(\delta\). Finally, note that the price of adaptivity is only an additional doubly logarithmic term $c\,\frac{N}{s_*^{2}}\log\log\!\bigl(s_*^{-2}\bigr)\,$, which is negligible compared to the leading term, for instance if $\delta$ is small.

\subsection*{Acknowledgment}

This work is partly supported by the German Research Foundation (DFG) and by the ANR through the DFG-ANR PRCI ``ASCAI" (GH 257/3-1 ;  ANR-21-CE23-0035)

\bibliography{bibliography.bib}
\bibliographystyle{icml2025}

\newpage 
\appendix
\onecolumn

\section{Proofs of Theorem~\ref{theorem:firstAdaptive}}\label{appendix:proof_UB}

In this appendix, we prove the correctness of KABC (Algorithm~\ref{algo:KABC}) and derive a high-probability upper bound on its budget. 

\subsection{Proof of Theorem~\ref{theorem:firstAdaptive}}

Recall from Section~\ref{sec:algorithm} that KABC repeatedly calls the subroutine \texttt{CLUSTER}$(n_k,\delta_k)$ with increasing per-arm budgets $n_k$ and decreasing confidence levels $\delta_k$.
At iteration $k$, \texttt{CLUSTER} constructs a graph by performing variance-aware kernel two-sample tests between all pairs of arms and returns the connected components as a clustering $\mathcal{C}_k$. For any arm $i \in [N]$, let $C_k(i)$ denote the unique cluster in $\mathcal{C}_k$ containing $i$.

The analysis of KABC therefore reduces to understanding, for each fixed $k$, how \texttt{CLUSTER} behaves in terms of type~I and type~II errors under the thresholds $\mathcal{B}^{i,j}(n_k,\delta_k)$ introduced in Equation\ref{def:threshold}.

Intuitively, we need that $\texttt{CLUSTER}(n_k,\delta_k)$ either returns fewer than $K$ clusters or identifies the correct $K$ clusters. The condition $\|\hat{\mu}_i - \hat{\mu}_j\| \leq \mathcal{B}^{i,j}(n_k,\delta_k)$ ensures that arms $i$ and $j$ with $\mu_i = \mu_j$ are clustered together with high probability. Moreover, there exists an iteration $k$ such that arms with $\mu_i \neq \mu_j$ are separated (i.e., $\|\hat{\mu}_i - \hat{\mu}_j\| > \mathcal{B}^{i,j}(n_k,\delta_k)$) with high probability, yielding exactly $K$ clusters. These key properties of $\texttt{CLUSTER}$ are formalized in Lemma~\ref{lemma:CLUSTER}, whose proof---relying solely on concentration inequalities from Appendix~\ref{appendix:concentration}---is deferred to Appendix~\ref{appendix:lemma_cluster}.

\begin{lemma}\label{lemma:CLUSTER}
For a fixed iteration $k\geq 1$, consider the clustering $\mathcal{C}_k$ returned by \texttt{CLUSTER}$(n_k,\delta_k)$, and let $C_k(i)$ denote the cluster containing arm $i$.

Define the event \emph{type~I error event} $\mathcal{E}_{1,k}$ under which arms with identical KMEs are assigned to different clusters in $\mathcal{C}_k$:
\begin{equation}\label{eq:type_1_error}
    \mathcal{E}_{1,k}=\bigcup_{\substack{i \neq j \in [N]\\\mu_i = \mu_j}}\Big\{C_k(i) \neq C_k(j)\Big\} \enspace. 
\end{equation}
 Then,
\[ \forall k\geqslant 1, \: \mathbb{P}(\mathcal{E}_{1,k})\leqslant \delta_k \enspace.\]

Define the \emph{type~II error event} $\mathcal{E}_{2,k}$ under which arms with distinct KMEs are assigned to the same cluster in $\mathcal{C}_k$:
\begin{equation}\label{eq:type_2_error}
    \mathcal{E}_{2,k}=\bigcup_{\substack{i \neq j \in [N]\\\mu_i \ne \mu_j}}\Big\{C_k(i) = C_k(j)\Big\} \enspace. 
\end{equation}

Let $s_*=s_*(\nu)$ be given  by
\begin{equation}\label{snr}
    s^2_*(\nu):= \min_{\substack{i \neq j \in [N]\\\mu_i \neq \mu_j}} \frac{{\|\mu_i-\mu_j\|}^2}{\mathcal{V}^*_i} \wedge \frac{2{\|\mu_i-\mu_j\|
    }}{\sqrt{\bar{g}}} \enspace. 
\end{equation}

 For any iteration $\forall k\geqslant 1$, such that $n_k\geqslant 128\frac{1}{s_*^2}\log\left(\frac{8(N^2-N)}{\delta_k}\right)$, then 
    \[ \mathbb{P}(\mathcal{E}_{2,k}) \leqslant \delta_k\enspace.\]
\end{lemma}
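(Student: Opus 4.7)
The plan is to introduce a single high-probability ``good event'' $\mathcal{G}_k$ on which the graph built by \texttt{CLUSTER}$(n_k,\delta_k)$ matches the true cluster structure, so that both $\mathcal{E}_{1,k}$ and $\mathcal{E}_{2,k}$ reduce to deterministic implications. The event $\mathcal{G}_k$ combines (i) the two-sided MMD concentration of Lemma~\ref{lemma:varaw},
\[
\bigl|\|\mu_i-\mu_j\|-\|\hat{\mu}_i-\hat{\mu}_j\|\bigr|\le \mathcal{B}^{i,j}(n_k,\delta_k)\qquad \forall i\neq j\in[N],
\]
and (ii) a variance concentration of the form $\sqrt{\hat{\mathcal{V}}_i}\le \sqrt{\mathcal{V}_i^*}+C\sqrt{\bar g/n_k}$ for every arm, to be invoked from Appendix~\ref{appendix:concentration}. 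Since the per-pair failure probability $\delta_k/(N^2-N)$ is already built into \eqref{def:threshold}, a union bound over the $(N^2-N)/2$ pairs (for MMD) and the $N$ arms (for variances) yields $\mathbb{P}(\mathcal{G}_k^c)\le \delta_k$.

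\textbf{Type I.} On $\mathcal{G}_k$, any pair $i\neq j$ with $\mu_i=\mu_j$ satisfies $\|\hat\mu_i-\hat\mu_j\|\le \mathcal{B}^{i,j}(n_k,\delta_k)$, so the edge $\{i,j\}$ is added in Algorithm~\ref{alg:cluster} and $i,j$ lie in the same connected component of $\mathcal{C}_k$. Hence $\mathcal{E}_{1,k}\subseteq \mathcal{G}_k^c$, yielding $\mathbb{P}(\mathcal{E}_{1,k})\le \delta_k$, with no condition required on $n_k$, exactly as stated.

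\textbf{Type II.} On $\mathcal{G}_k$, for any pair $i\neq j$ with $\mu_i\neq \mu_j$ the reverse triangle inequality gives $\|\hat\mu_i-\hat\mu_j\|\ge \|\mu_i-\mu_j\|-\mathcal{B}^{i,j}(n_k,\delta_k)$, so it suffices to prove that
\[
2\,\mathcal{B}^{i,j}(n_k,\delta_k)\;<\;\|\mu_i-\mu_j\|.
\]
Substituting the variance upper bound into \eqref{def:threshold} and using $\sqrt{\tilde g}\le\sqrt 2\sqrt{\bar g}$, the threshold splits into a sub-Gaussian part of order $\sqrt{(\mathcal{V}_i^*\vee\mathcal{V}_j^*)\log(8(N^2-N)/\delta_k)/n_k}$ and a bounded-support part of order $\sqrt{\bar g}\log(8(N^2-N)/\delta_k)/n_k$. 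From the definition of $s_*^2$ one extracts simultaneously
\[
\sqrt{\mathcal{V}_i^*\vee\mathcal{V}_j^*}\le \|\mu_i-\mu_j\|/s_*,\qquad \sqrt{\bar g}\le 2\|\mu_i-\mu_j\|/s_*^2,
\]
so the hypothesis $n_k\ge 128\, s_*^{-2}\log(8(N^2-N)/\delta_k)$ drives each of the two parts of $\mathcal{B}^{i,j}(n_k,\delta_k)$ below $\|\mu_i-\mu_j\|/4$. Consequently $\|\hat\mu_i-\hat\mu_j\|>\mathcal{B}^{i,j}(n_k,\delta_k)$, no edge is added between $i$ and $j$, and the connected components of $G$ separate all arms with distinct KMEs; this gives $\mathcal{E}_{2,k}\subseteq \mathcal{G}_k^c$ and hence $\mathbb{P}(\mathcal{E}_{2,k})\le \delta_k$.

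\textbf{Main obstacle.} The delicate step is the arithmetic in the Type~II argument: passing from the empirical variances $\hat{\mathcal{V}}_i$ to the true variances $\mathcal{V}_i^*$ while keeping the remainder absorbable into the bounded-support term, and then balancing the two summands of $\mathcal{B}^{i,j}$ against, respectively, the Bernstein-like ratio $\|\mu_i-\mu_j\|^2/(\mathcal{V}_i^*\vee\mathcal{V}_j^*)$ and the boundedness ratio $\|\mu_i-\mu_j\|/\sqrt{\bar g}$ appearing in $s_*^2$. Concretely, one must check that the universal constant $128$ is large enough to dominate simultaneously the factors $\sqrt 2$ and $32/3$ from \eqref{def:threshold} together with the slack from the empirical-variance bound -- this bookkeeping is the only genuinely technical piece of the proof.
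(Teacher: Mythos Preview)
Your plan is essentially the paper's, and the Type~I argument is identical. The substantive difference is in Type~II: you use Lemma~\ref{lemma:varaw} \emph{symmetrically}, deducing $\|\hat\mu_i-\hat\mu_j\|\ge \|\mu_i-\mu_j\|-\mathcal{B}^{i,j}$ and then asking for $2\mathcal{B}^{i,j}<\|\mu_i-\mu_j\|$. The paper instead uses two \emph{different} concentration inequalities: Lemma~\ref{lemma:varaw} (constant $32/3$, empirical variances) only to interpret the threshold, and the sharper Lemma~\ref{lemma:tolstikhin} (constant $8/3$, true variances) for the lower bound on $\|\hat\mu_i-\hat\mu_j\|$. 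After replacing $\hat{\mathcal V}_i$ by $\mathcal V_i^*$ via Lemma~\ref{lemma:empVarBound}, the paper must verify
\[
2\bigl(\sqrt{\mathcal V_i^*}+\sqrt{\mathcal V_j^*}\bigr)\sqrt{\tfrac{2L}{n_k}}+\Bigl(\tfrac{56}{3}\sqrt{\tilde g}+\tfrac{8}{3}\sqrt{\bar g}\Bigr)\tfrac{L}{n_k}\le\|\mu_i-\mu_j\|,
\]
whereas your route requires the same first term but $\tfrac{112}{3}\sqrt{\tilde g}\,\tfrac{L}{n_k}$ in the second.

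This difference is exactly the ``bookkeeping'' you flag, and it does not close with the constant $128$ in the hypothesis. With $L/n_k\le s_*^2/128$, $\sqrt{\tilde g}\le\sqrt{2\bar g}$ and $\sqrt{\bar g}\le 2\|\mu_i-\mu_j\|/s_*^2$, your bounded-support term after variance substitution is $(56/3)\sqrt{\tilde g}\,L/n_k\le (56/3)\cdot 2\sqrt 2/128\cdot\|\mu_i-\mu_j\|\approx 0.41\,\|\mu_i-\mu_j\|$, not $\le \tfrac14\|\mu_i-\mu_j\|$ as you assert; adding the sub-Gaussian part ($\le \tfrac14\|\mu_i-\mu_j\|$) gives $\mathcal B^{i,j}\lesssim 0.66\,\|\mu_i-\mu_j\|$, so $2\mathcal B^{i,j}\not<\|\mu_i-\mu_j\|$. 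The paper's asymmetric combination yields instead roughly $0.66+0.29<1$, which is why $128$ suffices there. Your argument is correct up to constants (it would prove the lemma with, say, $n_k\ge 300\,s_*^{-2}L$), but to obtain the stated $128$ you need to swap in Lemma~\ref{lemma:tolstikhin} for the lower tail, as the paper does.
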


\begin{proof}[Proof of Theorem~\ref{theorem:firstAdaptive}]
   Let $\delta \in (0,1]$ and assume the environment $\nu$ has $N$ arms forming exactly $K$ groups with distinct KMEs. We proceed in three parts: $(i)$ correctness (outputting the true partition with probability at least $1 - \delta$), $(ii)$ almost-sure termination, and $(iii)$ a high-probability bound on the sampling complexity.

 \paragraph{\underline{$(i)$ Correctness of $KABC(\delta,K)$}}

 Consider the call of $KABC(\delta,K)$ on the environment $\nu$.  Let $\mathcal{C}$ be the clustering returned by KABC$(\delta,K)$, obtained from $\texttt{CLUSTER}(n_k,\delta_k)$ at the stopping iteration $k \geq 1$. Observe that $k<+\infty$ happens almost surely (as we prove in (ii)), and the clustering $\mathcal{C}$ contains exactly $K$ groups. 
 On the event $\mathcal{E}_{1,k}^c$ (Equation~\ref{eq:type_1_error}), all arms with identical KMEs are clustered together. Since, by definition of the model in Section~\ref{sec:setting}, the true partition $\mathcal{C}^*$ is exactly the partition into equivalence classes of the relation $\mu_i=\mu_j$ and has cardinality $K$, any clustering with no type~I error and exactly $K$ nonempty clusters must coincide with $\mathcal{C}^*$ (up to permutation of the labels).
 Since the stopping iteration $k$ is unknown, we proceed with a union bound 

\begin{align*}
        \P{\left\{\mathcal{C}\text{ is incorrect}\right\}}
        =& \P{\event{\exists k \geq 1 \, \colon \mathcal{C}_k\text{ is incorrect} \text{ and} \left|\mathcal{C}_k\right| = K}} \\
        \leq{}& \sum_{k=1}^{\infty}\P{\mathcal{E}_{1,k}} \\
        \leq{}& \sum_{k=1}^{\infty} \delta_k \leq \delta \enspace, 
    \end{align*}
    where the final bound uses $\sum_{k=1}^\infty 1/k^2 = \pi^2/6$ and $\delta_k = \delta/(4k^2)$.

 \paragraph{\underline{$(ii)$ Almost-sure termination of KABC$(\delta,K)$}}

On $\mathcal{E}_{1,k}^c \cap \mathcal{E}_{2,k}^c$, we have that $\mathcal{C}_k$ clusters the arms exactly according to their KMEs, and $\mathcal{C}_k$ equals the true partition. In particular $|\mathcal{C}_k| = K$, and the algorithm terminates.  In other words, once $n_k$ is large enough , the graph built by \texttt{CLUSTER} exactly matches the true equivalence relation $\mu_i=\mu_j$, and KABC stops at that iteration.

Then, 
\begin{align*}
        & \P{KABC(\delta,K)\text{ never terminates}} \leqslant \inf_{k\to\infty} \mathbb{P}(\mathcal{E}_{1,k} \cup \mathcal{E}_{2,k}) \enspace.
    \end{align*}
Moreover, for $k$ large enough, it holds that $n_k \geq 128 s_*^{-2} \log\big(8(N^2-N)/\delta_k\big)$, Lemma~\ref{lemma:CLUSTER} then implies that for $k$ large enough, $\mathbb{P}(\mathcal{E}_{1,k} \cup \mathcal{E}_{2,k})\leqslant \delta_k$, and  $\lim_{k\to\infty}\mathbb{P}(\mathcal{E}_{1,k} \cup \mathcal{E}_{2,k})=0$.
Finally, 
\begin{align*}
        & \P{KABC(\delta,K)\text{ never  terminates}} \leqslant \inf_{k\to\infty} \mathbb{P}(\mathcal{E}_{1,k} \cup \mathcal{E}_{2,k})=0\enspace,
    \end{align*}
and the algorithm terminates almost surely. 
    
\paragraph{\underline{$(iii)$ Budget of $KABC(\delta,K)$}}

Denote
\begin{equation}\label{def:k_*}
    k_*=\left\lceil \log_2\left(\frac{128}{s_*^2}\right)\right\rceil \vee 1 \enspace,
\end{equation}
so that $n_k \geq 128 s_*^{-2} \log\big(8(N^2-N)/\delta_k\big)$ for all $k \geq k_*$. Lemma~\ref{lemma:CLUSTER} then implies $\mathbb{P}(\mathcal{E}_{1,k} \cup \mathcal{E}_{2,k}) \leq \delta_k \leq \delta_{k_*}$ for $k \geq k_*$.

On $\mathcal{E}^c_{1,k_*}\cap\mathcal{E}^c_{2,k_*}$, the algorithm terminates at or before iteration $k_*$.  The per-iteration budget is $\tau_k = N \cdot n_k$, so the total budget $\tau$ satisfies
 \begin{align*}
        & \P{\tau \leq \sum_{k=1}^{k_*}\tau_k} \geq \P{\mathcal{E}^c_{1,k_*}\cap\mathcal{E}^c_{2,k_*}}  \geq 1 - \delta_{k_*}\geqslant 1-\delta \enspace.
    \end{align*}
    
    Now,
    \begin{align*}
         \sum_{k=1}^{k_*}\tau_k = &N \sum_{k=1}^{k_*} n_k = N \sum_{k=1}^{k_*} \left\lceil 2^k \log\frac{8(N^2 - N)}{\delta_k} \right\rceil \\
        \leq{}& 2\sum_{k=1}^{k_*} 2^{k}\cdot \log\left(\frac{8(N^2 - N)}{\delta_{k_*}} \right) \\
        \leq{}& 4N\cdot 2^{k_*}\cdot \log\left(\frac{8(N^2 - N)}{\delta_{k_*}} \right) \\
        \leq{}&8N \cdot\left(\frac{128}{s_*^2}\vee 1\right)\cdot \left(\log\left(\frac{32(N^2 - N)k_*^2}{\delta} \right)\right) 
    \end{align*}
\end{proof}

\subsection{Proof of  Lemma~\ref{lemma:CLUSTER}}\label{appendix:lemma_cluster}

 \begin{proof}

We fix any iteration $k$, for which we call $\texttt{CLUSTER}(n_k,\delta_k)$.  Recall that the algorithm $\texttt{CLUSTER}$ constructs an undirected graph $G_k=([N],E_k)$, whose vertices are the arms $[N]$, and whose set of edges is $ E_k=\{\{i,j\} : \|\hat\mu_i-\hat\mu_j\|\leqslant \mathcal{B}^{i,j}(n_k,\delta_k)\}$.  Then, the clustering $\mathcal{C}_k$ is defined as the connected components of $G_k$.  

\textbf{Type-I error: splitting arms with identical KME.} 
   
We consider the event $\mathcal{E}_{1,k}$ (Equation~\eqref{eq:type_1_error}), under which 
$\mathcal{C}_k$ assigns two arms with the same KME to different clusters.
If $C_k(i) \neq C_k(j)$, then $E_k$ cannot contain the edge $\left\{i, j\right\}$, because otherwise they would be in the same connected component, and thus in the same cluster. By definition of $E_k$,  if $(i,j)\not\in E_k$, that means that the comparison of $\empdist$ to the decision boundary yielded false. It follows by construction that 
     \begin{equation*}
        \mathcal{E}_{1,k} \subset \bigcup_{\substack{i \neq j \in [N]\\\mu_i = \mu_j}}\event{\empdist > \mathcal{B}^{i,j}(n_k,\delta_k)},
    \end{equation*}
where 
  \begin{equation*}
        \mathcal{B}^{i,j}(n_k,\delta_k) = \left(\sqrt{\hat{\mathcal{V}}_i} + \sqrt{\hat{\mathcal{V}}_j}\right)\sqrt{2\frac{\log\frac{8(N^2 - N)}{\delta_k}}{n_k}} + \frac{32}{3}\sqrt{\tilde{g}}\frac{\log\frac{8(N^2 - N)}{\delta_k}}{n_k}.
    \end{equation*}
    
Now, we bound the probability $\mathbb{P}(\mathcal{E}_{1,k})$, with a union bound, together with the concentration inequality from \autoref{lemma:varaw}  with $\delta' = \frac{\delta_k}{N^2 - N}$,
    \begin{align*}
       \P{\mathcal{E}_{1,k}} 
& \leq \sum_{\substack{i \neq j \in [N]\\\mu_i = \mu_j}} \P{\event{\|\hat\mu_i-\hat\mu_j\| > \mathcal{B}^{i,j}(n_k,\delta_k)}} \\ 
& \leq \sum_{\substack{i \neq j \in [N]\\\mu_i = \mu_j}} \frac{\delta_k}{N^2 - N}  \leqslant \delta_k \enspace. 
    \end{align*}

\textbf{Type-II error: merging arms with different KMEs.} 

We now consider  the event $\mathcal{E}_{2,k}$ (Equation~\ref{eq:type_2_error}, under which $\mathcal{C}_k$ assigns two arms with different KMEs to the same cluster. For simplicity, we note 
\begin{equation*}
    \delta_{N,k}=\frac{\delta_k}{(N^2-N)}=\frac{\delta}{4\cdot(N^2-N)\cdot k^2}
\end{equation*}

    Assume that $\mu_i\ne \mu_j$, while $C_k(i) = C_k(j)$. By construction of $\mathcal C_k$, the condition  $C_k(i) = C_k(j)$ implies that there exists a path in the graph $G_k=([N],E_k)$ between $i$ and $j$. Since there is a path between two arms with different means, somewhere on that path must be an edge connecting two arms with different means. Thus, it holds that
    \begin{equation}\label{eq:inclusion_error_2}
       \mathcal{E}_{2,k} \subset \bigcup_{\substack{i,j \in [N]\\\mu_i \ne \mu_j}}\event{\empdist \leqslant \mathcal{B}^{i,j}(n_k,\delta_k)}\enspace.
    \end{equation}

   Let $i,j\in[N]$ be two arms such that $\mu_i\ne \mu_j$
   Now we need to control the probability of the event $A_{i,j} \coloneqq \event{\empdist \leq \mathcal{B}^{i,j}(n_k,\delta_k)}$.

\autoref{lemma:tolstikhin} provides us that with probability at least $1 - \frac{\delta_{N,k}}{2}$
    \begin{equation*}
        \empdist \geq \norm{\mu_j - \mu_i} - \left(\sqrt{\mathcal{V}_i^*} + \sqrt{\mathcal{V}_j^*}\right)\sqrt{2\frac{\log\frac{8}{\delta_{N,k}}}{n_k}} - \frac{8}{3}\sqrt{\bar{g}}\frac{\log\frac{8}{\delta_{N,k}}}{n_k}.
    \end{equation*}
It remains to prove that the assumption $n_k$  ensures that (with high probability), the lower bound above from \autoref{lemma:tolstikhin} will be larger than the threshold $\mathcal{B}^{i,j}(n_k,\delta_k)$ used for our classification. 

    Since $\mathcal{B}^{i,j}(n_k,\delta_k)$ contains the empirical variance, we first need to bound the empirical variance by the true variance. We define the event
    \[
    B_i \coloneqq \event{\sqrt{\empvar{i}} \leq \sqrt{\mathcal{V}_i^*} + 2 \sqrt{\frac{2 \tilde{g} \log\frac{4}{\delta_{N,k}}}{n_k}}}\enspace,
    \]
whose probability is controlled by  \autoref{lemma:empVarBound} as
    \begin{equation}\label{eq:controle_variance}
        \P{B_i^c} \leq \frac{\delta_{N,k}}{8}.
    \end{equation}
    With a union bound on Equation~\ref{eq:inclusion_error_2}, we have that 
\begin{align}
    \mathbb{P}(\mathcal{E}_{2,k}) & \leq  \sum_{\substack{i,j \in [N]\\\mu_i \ne \mu_j}}\P{A_{i,j}} \nonumber\\
    & \leq  \sum_{\substack{i,j \in [N]\\\mu_i \ne \mu_j}}\left(\P{A_{i,j} \cap B_i \cap B_j}+\P{B_i^c}+\P{B_j^c}\right) \nonumber \\ 
  & \leq  \sum_{\substack{i,j \in [N]\\\mu_i \ne \mu_j}}\left(\P{A_{i,j} \cap B_i \cap B_j}+\frac{\delta_{N,k}}{4}\right) \label{eq:controle_error_2_a}\enspace,
\end{align}
where the final inequality follows from Equation~\eqref{eq:controle_variance}.  Then, it remains  to bound the probability of the events $A_{i,j} \cap B_i \cap B_j$.

  Under $A_{i,j} \cap B_i \cap B_j$, it holds that:
    \begin{align}
        & \empdist \leq \left(\sqrt{\hat{\mathcal{V}}_i} + \sqrt{\hat{\mathcal{V}}_j}\right)\sqrt{2\frac{\log\frac{8}{\delta_{N,k}}}{n_k}} + \frac{32}{3}\sqrt{\tilde{g}}\frac{\log\frac{8}{\delta_{N,k}}}{n_k} \notag \\
        \leq{}& \left(\sqrt{\mathcal{V}_i^*} + \sqrt{\mathcal{V}_j^*} + 4 \sqrt{\frac{2 \tilde{g} \log\frac{8}{\delta_{N,k}}}{n_k}} \right) \sqrt{2\frac{\log\frac{8}{\delta_{N,k}}}{n_k}} + \frac{32}{3}\sqrt{\tilde{g}}\frac{\log\frac{8}{\delta_{N,k}}}{n_k} \notag \\
        =& \left(\sqrt{\mathcal{V}_i^*} + \sqrt{\mathcal{V}_j^*}\right) \sqrt{2\frac{\log\frac{8}{\delta_{N,k}}}{n_k}} + \frac{56}{3}\sqrt{\tilde{g}}\frac{\log\frac{8}{\delta_{N,k}}}{n_k} \label{eq:ABiBj}
    \end{align}
    
    Now, assume that 
    \begin{equation*}
        n_k \geq \max_{\substack{i \neq j \in [N]\\\mu_i \neq \mu_j}}\max \left\{128\frac{\mathcal{V}^*_i}{{\|\mu_i-\mu_j\|}^2}, \frac{112 \sqrt{\tilde{g}} + 16\sqrt{\bar{g}}}{3{\|\mu_i-\mu_j\|^2}} \right\}\log\frac{8}{\delta_{N,k}}
    \end{equation*}
    In particular, as $\tilde{g}\leqslant 2\bar{g}$, this will hold if $n_k\geqslant 128\frac{1}{s_*^2}\log\frac{8}{\delta_{N,k}}$. 
    
   Now,  we derive from direct computation that 
       \begin{align}
       & \left(\sqrt{\mathcal{V}_i^*} + \sqrt{\mathcal{V}_j^*}\right) \sqrt{2\frac{\log\frac{8}{\delta_{N,k}}}{n_k}} + \frac{56}{3}\sqrt{\tilde{g}}\frac{\log\frac{8}{\delta_{N,k}}}{n_k}  \nonumber
    \\ \leqslant  & \norm{\mu_j - \mu_i} - \left(\sqrt{\mathcal{V}_i^*} + \sqrt{\mathcal{V}_j^*}\right)\sqrt{2\frac{\log\frac{8}{\delta_{N,k}}}{n_k}} - \frac{8}{3}\sqrt{\bar{g}}\frac{\log\frac{8}{\delta_{N,k}}}{n_k} \label{eq:UBmatchLB}
    \end{align}

    Finally, we gather Equations~\eqref{eq:ABiBj}, and~\eqref{eq:UBmatchLB} to control $\P{A_{i,j} \cap B_i \cap B_j}$, 
    \begin{align*}
        & \P{A_{i,j} \cap B_i \cap B_j}\\
        \leq{}& \P{{\empdist \leq \left( \sqrt{\mathcal{V}_i^*} + \sqrt{\mathcal{V}_j^*} \right) \sqrt{2\frac{\log\frac{8}{\delta_{N,k}}}{n_k}} + \frac{56}{3}\sqrt{\tilde{g}}\frac{\log\frac{8}{\delta_{N,k}}}{n_k}}} \\
        \leq{}& \P{{\empdist \leq \norm{\mu_j - \mu_i} - \left(\sqrt{\mathcal{V}_i^*} + \sqrt{\mathcal{V}_j^*}\right)\sqrt{2\frac{\log\frac{8}{\delta_{N,k}}}{n_k}} - \frac{8}{3}\sqrt{\bar{g}}\frac{\log\frac{8}{\delta_{N,k}}}{n_k}}} \\
        \leq{}& \frac{\delta_{N,k}}{2},
    \end{align*}
    where the last inequality results from \autoref{lemma:tolstikhin} applied with $\delta'=\frac{\delta_{N,k}}{2}$.

Finally, we come back to our previous bound on $\mathbb{P}(\mathcal{E}_{2,k})$ in Equation~\ref{eq:controle_error_2_a}, to conclude that 

\begin{align}
    \mathbb{P}(\mathcal{E}_{2,k}) 
  & \leq  \sum_{\substack{i,j \in [N]\\\mu_i \ne \mu_j}}\left(\P{A_{i,j} \cap B_i \cap B_j}+\frac{2\delta_{N,k}}{8}\right) \nonumber \\
    & \leq  \sum_{\substack{i,j \in [N]\\\mu_i \ne \mu_j}}\left(\frac{\delta_{N,k}}{2}+\frac{\delta_{N,k}}{4}\right) \nonumber \\
    & \leqslant \delta_k  \nonumber\enspace ,
\end{align}
where the final inequality follows from the definition of $\delta_{N,k}=\frac{\delta_k}{(N^2-N)}$
 \end{proof}

\subsection{Concentration inequalities}\label{appendix:concentration}

The design of the algorithm relies mostly on concentration inequalities for empirical KME, and empirical variances. For completeness, we recall in this subsection several inequalities whose proofs can be found in \cite{wolfer2025variance}. 

\begin{lemma}[Variance-aware empirical bound \cite{wolfer2025variance}]\label{lemma:varianceAwareEmpBound}
    Let $g$ be a continuous, positive definite, characteristic, translation invariant, bounded kernel.
    Assume arm $i \in [N]$ was sampled $n \in \Nnums$ times to calculate the empirical KME $\hat{\mu}_i$ and the empirical variance $\hat{\mathcal{V}}_i$.
    Then it holds that
    \begin{equation*}
        \forall \delta' \in (0, 1] \colon \P{\event{\emptoreal{i} \leq \sqrt{2 \empvar{i} \frac{\log\frac{4}{\delta'}}{n}} + \frac{16}{3}\sqrt{\tilde{g}}\frac{\log\frac{4}{\delta'}}{n}}} \geq 1 - \delta'.
    \end{equation*}
\end{lemma}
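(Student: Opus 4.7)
The plan is to establish this variance-aware deviation bound by combining two Bernstein-type concentration inequalities through a union bound: one that controls $\|\hat\mu_i-\mu_i\|$ by the \emph{true} RKHS variance $\mathcal{V}_i^*$, and a second that controls the deviation of the \emph{empirical} variance $\hat{\mathcal{V}}_i$ from $\mathcal{V}_i^*$. Substituting the latter into the former then yields an inequality depending only on the observable quantity $\hat{\mathcal{V}}_i$. The translation-invariance assumption plays a central role: because $g(x,x)=\Psi(0)$ is constant, the Hilbert-space norm $\|\phi(X_t^i)\|^2=\Psi(0)$ is deterministic, and the centered feature $\phi(X_t^i)-\mu_i$ is almost-surely bounded (in fact $\|\phi(x)-\phi(y)\|^2=2(\Psi(0)-\Psi(x-y))\leqslant \tilde g$). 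This deterministic norm bound is exactly what is needed for a sub-exponential-style tail.

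First, I would apply a Hilbert-space Bernstein inequality (in the style of Pinelis--Sakhanenko) to the i.i.d.\ centered variables $Y_t := \phi(X_t^i)-\mu_i$, which satisfy $\|Y_t\|\leqslant \sqrt{\tilde g}$ and $\mathbb{E}\|Y_t\|^2=\mathcal{V}_i^*$. This yields, with probability at least $1-\delta'/2$,
\begin{equation*}
    \|\hat\mu_i-\mu_i\| \;\leqslant\; \sqrt{2\mathcal{V}_i^*\,\frac{\log(4/\delta')}{n}} \;+\; \frac{c_1\sqrt{\tilde g}\,\log(4/\delta')}{n},
\end{equation*}
with a small absolute constant $c_1$. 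Note the calibration: I apply each of my two events at confidence $1-\delta'/2$ so that by union bound the aggregate holds at $1-\delta'$, which is exactly why the stated bound has $\log(4/\delta')$ rather than $\log(2/\delta')$.

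Second, I would establish a one-sided control $\sqrt{\mathcal{V}_i^*}\leqslant \sqrt{\hat{\mathcal{V}}_i}+c_2\sqrt{\tilde g\,\log(4/\delta')/n}$ with probability at least $1-\delta'/2$. The natural way is to write $\hat{\mathcal{V}}_i$ as a $U$-statistic in the variables $\phi(X_t^i)$ (using the identity $\mathcal{V}_i^*=\mathbb{E}\,g(X,X)-\|\mu_i\|^2$ and the corresponding empirical decomposition), and to apply a Bernstein-type bound to each of its symmetric parts; a Maurer--Pontil style argument, adapted from real-valued random variables to the kernel setting via the boundedness $|g(x,y)|\leqslant \bar g$ and the range bound $\tilde g$, delivers the desired inequality. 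Concatenating the two events via a union bound and substituting the variance inequality into the first display gives
\begin{equation*}
    \|\hat\mu_i-\mu_i\| \;\leqslant\; \sqrt{2\hat{\mathcal{V}}_i\,\frac{\log(4/\delta')}{n}} \;+\; \sqrt{2}\,c_2\sqrt{\tilde g}\,\frac{\log(4/\delta')}{n} \;+\; \frac{c_1\sqrt{\tilde g}\,\log(4/\delta')}{n},
\end{equation*}
after using $\sqrt{a+b}\leqslant \sqrt{a}+\sqrt{b}$ to split the variance term. Collecting both linear-in-$\log/n$ contributions yields the form with a single coefficient in front of $\sqrt{\tilde g}\log(4/\delta')/n$.

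The main obstacle will be the empirical-variance step: unlike the Pinelis--Sakhanenko bound, a sharp Hilbert-space empirical Bernstein inequality for $\hat{\mathcal{V}}_i$ is not standard off-the-shelf, and obtaining it with constants tight enough to collect into the claimed $\frac{16}{3}$ requires a careful decomposition of $\hat{\mathcal{V}}_i-\mathcal{V}_i^*$ (a centered quadratic term plus a linear deviation term in $\hat\mu_i-\mu_i$), and a non-trivial accounting of the Bernstein constants in each piece. Once this single-arm variance concentration is in hand, the combination with the Pinelis--Sakhanenko bound is routine.
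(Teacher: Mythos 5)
The paper does not actually prove this lemma: it is quoted verbatim from \cite{wolfer2025variance}, and Appendix~\ref{appendix:concentration} explicitly defers all proofs in that subsection to that reference. That said, your two-step decomposition --- a Pinelis--Sakhanenko-type Hilbert-space Bernstein bound in terms of the true variance $\mathcal{V}_i^*$, plus a one-sided concentration of $\sqrt{\hat{\mathcal{V}}_i}$ around $\sqrt{\mathcal{V}_i^*}$, combined by a union bound at level $\delta'/2$ each --- is exactly the architecture the paper itself relies on elsewhere: your first ingredient is the single-arm version of Lemma~\ref{lemma:tolstikhin} and your second is Lemma~\ref{lemma:empVarBound}, and the proof of Lemma~\ref{lemma:CLUSTER} chains them in precisely the way you describe. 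So the route is right and almost certainly the one taken in the cited source. Two constant-level caveats: from translation invariance one gets $\|\phi(x)-\phi(y)\|^2 = 2\bigl(\Psi(0)-\Psi(x-y)\bigr) \leq 2\tilde{g}$, not $\tilde{g}$, so the almost-sure bound on the centered feature is $\sqrt{2\tilde{g}}$; and landing exactly on the coefficient $\tfrac{16}{3}$ requires the specific constants of the two ingredient bounds (with the paper's quoted constants one gets $4\sqrt{\tilde{g}} + \tfrac{4}{3}\sqrt{\bar{g}}$ for the second-order term, which matches $\tfrac{16}{3}\sqrt{\tilde{g}}$ only after relating $\bar{g}$ to $\tilde{g}$). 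Neither issue affects the validity of the approach, only the bookkeeping you already flag as the delicate part.
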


For our application, we will prefer the following inequality, which follows directly from \autoref{lemma:varianceAwareEmpBound}, by application of the triangle inequality. 

\begin{lemma}[Variance-aware empirical bound for the distance of two arms]\label{lemma:varaw}
    Under the same assumptions as \autoref{lemma:varianceAwareEmpBound}
    \begin{equation*}
        \forall \delta' \in (0, 1] \colon \P{{\left|\empdist-\norm{\mu_j - \mu_i}  \right| \leq \mathcal{B}}} \geq 1 - \delta' \enspace,
    \end{equation*}
    for
    \begin{equation*}
        \mathcal{B} \coloneqq \left(\sqrt{\hat{\mathcal{V}}_i} + \sqrt{\hat{\mathcal{V}}_j}\right)\sqrt{2\frac{\log\frac{8}{\delta'}}{n}} + \frac{32}{3}\sqrt{\tilde{g}}\frac{\log\frac{8}{\delta'}}{n} \enspace.
    \end{equation*}
\end{lemma}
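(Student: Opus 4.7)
The plan is to reduce this two-sample MMD bound to the single-sample KME concentration of \autoref{lemma:varianceAwareEmpBound} via the reverse triangle inequality in the RKHS, together with a simple union bound. The key observation is that the quantity of interest, $\bigl|\|\hat{\mu}_j-\hat{\mu}_i\|-\|\mu_j-\mu_i\|\bigr|$, compares the norms of the vectors $\hat{\mu}_j-\hat{\mu}_i$ and $\mu_j-\mu_i$ in $\mathcal{H}$; by the reverse triangle inequality in a Hilbert space, this is bounded by $\|(\hat{\mu}_j-\hat{\mu}_i)-(\mu_j-\mu_i)\|$, which in turn is bounded by $\|\hat{\mu}_i-\mu_i\|+\|\hat{\mu}_j-\mu_j\|$ via the usual triangle inequality. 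So the two-sample deviation decomposes cleanly into the sum of two single-sample KME deviations.

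Next, I would apply \autoref{lemma:varianceAwareEmpBound} separately to arm $i$ and to arm $j$, each with confidence level $\delta'/2$ in place of $\delta'$. This substitution turns the $\log(4/\delta')$ appearing in the original bound into exactly the $\log(8/\delta')$ that appears in the statement of $\mathcal{B}$. Concretely, with probability at least $1-\delta'/2$,
\begin{equation*}
    \|\hat{\mu}_i-\mu_i\|\;\leq\;\sqrt{2\hat{\mathcal{V}}_i\,\frac{\log(8/\delta')}{n}}+\frac{16}{3}\sqrt{\tilde{g}}\,\frac{\log(8/\delta')}{n},
\end{equation*}
and the analogous inequality holds for arm $j$ with probability at least $1-\delta'/2$.

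Finally, I would combine these two events by a union bound, yielding that with probability at least $1-\delta'$ both deviations hold simultaneously. Adding them and invoking the reverse triangle inequality gives
\begin{equation*}
    \bigl|\|\hat{\mu}_j-\hat{\mu}_i\|-\|\mu_j-\mu_i\|\bigr|\;\leq\;\|\hat{\mu}_i-\mu_i\|+\|\hat{\mu}_j-\mu_j\|\;\leq\;\bigl(\sqrt{\hat{\mathcal{V}}_i}+\sqrt{\hat{\mathcal{V}}_j}\bigr)\sqrt{2\,\frac{\log(8/\delta')}{n}}+\frac{32}{3}\sqrt{\tilde{g}}\,\frac{\log(8/\delta')}{n},
\end{equation*}
which is exactly $\mathcal{B}$. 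There is no real obstacle here: the only points to watch are bookkeeping the confidence level split ($\delta'/2$ per arm, so that $\log(4/(\delta'/2))=\log(8/\delta')$) and checking that the constants $\sqrt{2}$ and $16/3$ combine into $\sqrt{2}$ and $32/3$ after summing the two per-arm bounds. Because $i$ and $j$ are different arms and are sampled independently, independence is not actually required for the union bound; a simple sub-additivity argument suffices, which is what makes this corollary immediate.
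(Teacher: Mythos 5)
Your proof is correct and follows exactly the route the paper intends: the paper derives Lemma~\ref{lemma:varaw} from Lemma~\ref{lemma:varianceAwareEmpBound} ``by application of the triangle inequality,'' which is precisely your reverse-triangle-plus-triangle decomposition, a $\delta'/2$ split per arm (turning $\log(4/\delta')$ into $\log(8/\delta')$), and a union bound. The constant bookkeeping ($\tfrac{16}{3}+\tfrac{16}{3}=\tfrac{32}{3}$ and the factored $\sqrt{2}$) checks out, so nothing further is needed.
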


\begin{lemma}\label{lemma:tolstikhin}
    Let $g$ be a continuous, positive definite and bounded kernel with supremum $\bar{g}$.
    Assume arms $i \in [N]$ was sampled $n$ times to calculate the empirical KME $\hat{\mu}_i$.
    Then it holds that for all $\delta \in (0, 1]$
    \begin{equation*}
        \P{{\Big\lvert\empdist-\norm{\mu_j - \mu_i}\Big\rvert \leq   \mathcal{B}_*}} \geq 1 - \delta \enspace, 
    \end{equation*}
for 
\begin{equation*}
        \mathcal{B_*} \coloneqq \left(\sqrt{\mathcal{V}_i^*} + \sqrt{\mathcal{V}_j^*}\right)\sqrt{2\frac{\log\frac{4}{\delta}}{n}} + \frac{8}{3}\sqrt{\bar{g}}\frac{\log\frac{4}{\delta}}{n} \enspace. 
\end{equation*}
\end{lemma}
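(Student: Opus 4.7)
The plan is to reduce the two-sample statistic $\|\hat{\mu}_i-\hat{\mu}_j\|$ to two single-arm deviation bounds via the reverse triangle inequality, then invoke a Bernstein-type inequality for Hilbert-space-valued i.i.d.\ averages, and finally union-bound over the two arms. Concretely, I would first observe that in $\Hspace$
\[
\Bigl|\empdist - \norm{\mu_j - \mu_i}\Bigr| \;\leq\; \emptoreal{i} + \emptoreal{j},
\]
by the reverse triangle inequality applied to $\hat{\mu}_i-\hat{\mu}_j$ and $\mu_i-\mu_j$. It therefore suffices to control each empirical-to-population KME deviation separately.

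For a fixed arm $i$, one writes $\hat{\mu}_i - \mu_i = \frac{1}{n}\sum_{t=1}^{n} Z_t$, where $Z_t = \phi(X_t^i) - \mu_i$ are i.i.d.\ centered $\Hspace$-valued random variables. The two key a priori bounds are: $\|Z_t\| \leq 2\sqrt{\bar{g}}$ almost surely, since $\|\phi(x)\|^2 = g(x,x) \leq \bar{g}$ and hence $\|\phi(X_t^i) - \mu_i\| \leq \|\phi(X_t^i)\| + \|\mu_i\|$; and $\Expec{\|Z_t\|^2} = \mathcal{V}_i^*$ by definition~\eqref{eq:variance}. Applying a Bernstein-type inequality for Hilbert-space-valued averages (e.g.\ Theorem~3.5 of Pinelis 1994, which is also the inequality underlying Proposition~A.1 of \cite{tolstikhin2017minimax}) yields, for any $\delta' \in (0,1]$,
\[
\P{\emptoreal{i} \leq \sqrt{2\mathcal{V}_i^* \tfrac{\log(2/\delta')}{n}} + \tfrac{4}{3}\sqrt{\bar{g}}\tfrac{\log(2/\delta')}{n}} \geq 1 - \delta'.
\]
The constant $4/3$ comes from the Bernstein tail constant $2M/3$ instantiated at $M = 2\sqrt{\bar{g}}$.

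Finally, I would apply this bound to arms $i$ and $j$ with $\delta' = \delta/2$, so that $\log(2/\delta') = \log(4/\delta)$, and union-bound: both deviation bounds hold simultaneously with probability at least $1-\delta$. Summing them and combining with the triangle-inequality reduction gives
\[
\Bigl|\empdist - \norm{\mu_j-\mu_i}\Bigr| \leq \bigl(\sqrt{\mathcal{V}_i^*}+\sqrt{\mathcal{V}_j^*}\bigr)\sqrt{2\log(4/\delta)/n} + \tfrac{8}{3}\sqrt{\bar{g}}\log(4/\delta)/n,
\]
which is exactly $\mathcal{B}_*$, the $2\cdot(4/3) = 8/3$ prefactor on $\sqrt{\bar{g}}$ coming from summing the two single-arm bounds. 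The only real obstacle here is careful bookkeeping of constants: using the centered bound $\|Z_t\| \leq 2\sqrt{\bar{g}}$ (rather than the crude uncentered bound $\sqrt{\bar{g}}$) is what recovers the advertised $8/3$ after union-bounding; no deeper tool is needed beyond a Hilbert-space Bernstein inequality, which is already standard in the KME literature. A direct vector-valued Bernstein applied to $(\hat{\mu}_i-\hat{\mu}_j) - (\mu_i-\mu_j)$ would also work, but it mixes the two variances into a single term and makes it harder to display the clean $\sqrt{\mathcal{V}_i^*}+\sqrt{\mathcal{V}_j^*}$ split used downstream.
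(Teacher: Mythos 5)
Your proposal is correct and is essentially the standard argument that the paper implicitly relies on: the paper does not prove this lemma itself but recalls it from the cited literature (it is the triangle-inequality combination of the per-arm concentration bound of \citet{tolstikhin2017minimax}/\citet{wolfer2025variance}), and your derivation --- reverse triangle inequality to reduce to two single-arm deviations, Hilbert-space Bernstein with $M=2\sqrt{\bar g}$ and $\sigma^2=\mathcal{V}_i^*$, then a union bound at level $\delta/2$ per arm --- reproduces exactly that route with the correct constants.
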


\begin{lemma}[Bound for empirical Variance \cite{wolfer2025variance}]\label{lemma:empVarBound}
    Let $g$ be a continuous, positive definite, characteristic, translation invariant, bounded kernel.
    Assume arm $i \in [N]$ was sampled $n \in \Nnums$ times to calculate the empirical variance $\hat{\mathcal{V}}_i$.
    Let $b \in \left\{-1, 1\right\}$
    Then it holds that
    \begin{equation*}
        \forall \delta \in (0, 1] \colon \P{{b \left[\sqrt{\empvar{i}} - \sqrt{\mathcal{V}_i^*}\right] \leq 2 \sqrt{\frac{2 \tilde{g}\log\frac{1}{\delta}}{n}}}} \geq 1 - \delta.
    \end{equation*}
\end{lemma}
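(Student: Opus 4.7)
The target is a sub-Gaussian-type concentration of the empirical RKHS standard deviation $\sqrt{\hat{\mathcal{V}}_i}$ around $\sqrt{\mathcal{V}_i^*}$ at rate $\sqrt{\tilde g \log(1/\delta)/n}$. My plan is to bypass the hopeless attempt of applying McDiarmid directly to $\sqrt{\hat{\mathcal{V}}_i}$ (whose bounded-differences constant is only $\sqrt{2\tilde g/n}$, giving the much weaker rate $\sqrt{\tilde g \log(1/\delta)}$), and instead reduce the bound on $\sqrt{\hat{\mathcal{V}}_i} - \sqrt{\mathcal{V}_i^*}$ to two standard ingredients: a scalar Bernstein inequality for i.i.d.\ bounded summands and a Hilbert-valued Hoeffding inequality for the empirical mean.

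The starting point is the pivotal identity
\[
(n-1)\,\hat{\mathcal{V}}_i \;=\; \sum_{t=1}^n \|\phi(X_t) - \mu_i\|^2 \;-\; n\,\|\hat{\mu}_i - \mu_i\|^2,
\]
obtained by expanding $\|\phi(X_t)-\hat{\mu}_i\|^2 = \|(\phi(X_t)-\mu_i)+(\mu_i-\hat{\mu}_i)\|^2$, summing over $t$, and using $\sum_t(\phi(X_t)-\mu_i)=n(\hat{\mu}_i-\mu_i)$. This lets me write $\hat{\mathcal{V}}_i = T_1 - T_2$ with $T_1 := \tfrac{1}{n-1}\sum_t A_t$, $A_t := \|\phi(X_t)-\mu_i\|^2$, and $T_2 := \tfrac{n}{n-1}\|\hat{\mu}_i-\mu_i\|^2$. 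Jensen applied to $\mu_i=\mathbb{E}_{Y\sim\nu_i}[\phi(Y)]$ together with the translation-invariance bound $\|\phi(x)-\phi(y)\|^2\le 2\tilde g$ yields $A_t\in[0,2\tilde g]$ and $\mathrm{Var}(A_t)\le 2\tilde g\,\mathcal{V}_i^*$, and a direct check gives $\mathbb{E}(T_1-T_2)=\mathcal{V}_i^*$.

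I would then control the two pieces on a good event of probability at least $1-\delta$: a scalar Bernstein inequality gives $|T_1 - \tfrac{n}{n-1}\mathcal{V}_i^*|\lesssim \sqrt{\tilde g\,\mathcal{V}_i^*\log(1/\delta)/n}+\tilde g\log(1/\delta)/n$, while a Hilbert-valued Hoeffding/Pinelis inequality applied to the zero-mean i.i.d.\ summands $\phi(X_t)-\mu_i$, bounded in norm by $\sqrt{2\tilde g}$, yields $\|\hat{\mu}_i-\mu_i\|\lesssim \sqrt{\tilde g\log(1/\delta)/n}$, hence $T_2\lesssim \tilde g\log(1/\delta)/n$. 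I would then write
\[
|\sqrt{\hat{\mathcal{V}}_i}-\sqrt{\mathcal{V}_i^*}|\;\le\;|\sqrt{T_1-T_2}-\sqrt{T_1}|\;+\;|\sqrt{T_1}-\sqrt{\mathcal{V}_i^*}|,
\]
bound the first summand by $\sqrt{T_2}\lesssim\sqrt{\tilde g\log(1/\delta)/n}$ via $|\sqrt{a}-\sqrt{a-b}|\le\sqrt{b}$, and linearise the second via the identity $|\sqrt{T_1}-\sqrt{\mathcal{V}_i^*}| = |T_1-\mathcal{V}_i^*|/(\sqrt{T_1}+\sqrt{\mathcal{V}_i^*})$.

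The main obstacle is the last step: the linearisation denominator $\sqrt{T_1}+\sqrt{\mathcal{V}_i^*}$ can be tiny when $\mathcal{V}_i^*$ is of order $\tilde g\log(1/\delta)/n$ or smaller, in which case dividing the Bernstein bound by it blows up the first term of Bernstein's bound to an $O(\sqrt{\tilde g\log(1/\delta)/n})$ contribution and the second term to an undesirable $O(\tilde g\log(1/\delta)/(n\sqrt{\mathcal{V}_i^*}))$ contribution. I would resolve this by a regime-splitting argument: when $\mathcal{V}_i^*\gtrsim \tilde g\log(1/\delta)/n$, linearisation combined with Bernstein produces the target rate; when instead $\mathcal{V}_i^*\lesssim \tilde g\log(1/\delta)/n$, both $\sqrt{\hat{\mathcal{V}}_i}$ and $\sqrt{\mathcal{V}_i^*}$ are themselves at most $O(\sqrt{\tilde g\log(1/\delta)/n})$ (the former by directly bounding $T_1-T_2\le T_1\lesssim \tilde g\log(1/\delta)/n$), and the claim follows by the triangle inequality. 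The one-sided form with the explicit sign $b\in\{\pm 1\}$ is recovered by applying each concentration inequality one-sidedly; tracking the constant $2\sqrt{2}$ through this two-regime argument is the main bookkeeping challenge.
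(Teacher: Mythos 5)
The paper does not actually prove this lemma: it is imported verbatim from \cite{wolfer2025variance} (``we recall \dots several inequalities whose proofs can be found in \cite{wolfer2025variance}''), so there is no in-paper argument to compare against. Your blind proof is a genuinely different and essentially sound elementary route. The pivotal identity $(n-1)\hat{\mathcal{V}}_i=\sum_t\|\phi(X_t)-\mu_i\|^2-n\|\hat\mu_i-\mu_i\|^2$ is correct, as are the bounds $A_t\in[0,2\tilde g]$, $\operatorname{Var}(A_t)\le 2\tilde g\,\mathcal{V}_i^*$, the unbiasedness $\mathbb{E}[T_1-T_2]=\mathcal{V}_i^*$, and the elementary inequality $|\sqrt{a}-\sqrt{a-b}|\le\sqrt b$ (applicable since $T_2\le T_1$ automatically, as $\hat{\mathcal{V}}_i\ge 0$). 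The regime split does close: when $\mathcal{V}_i^*\gtrsim\tilde g\log(1/\delta)/n$ the Bernstein numerator divided by $\sqrt{\mathcal{V}_i^*}$ is $O(\sqrt{\tilde g\log(1/\delta)/n})$ in both terms, and in the complementary regime both $\sqrt{\mathcal{V}_i^*}$ and $\sqrt{T_1}$ are individually of that order. Two caveats. First, your route cannot recover the precise statement: union-bounding over the Bernstein event, the Hilbert-valued Hoeffding event, and the two regimes degrades $\log(1/\delta)$ to $\log(c/\delta)$ and yields a constant strictly worse than $2\sqrt2$; this is harmless for the paper (it only inflates the numerical constants $56/3$ and $128$ propagated into Lemma~\ref{lemma:CLUSTER} and Theorem~\ref{theorem:firstAdaptive}) but means you prove a weaker-constant version of the stated lemma, not the lemma itself. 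Second, your dismissal of the ``direct'' approach is slightly off target: plain McDiarmid on $\sqrt{\hat{\mathcal{V}}_i}$ is indeed useless, but the unnormalized functional $\sqrt{(n-1)\hat{\mathcal{V}}_i}=\min_{c\in\Hspace}\bigl(\sum_t\|\phi(X_t)-c\|^2\bigr)^{1/2}$ is a convex, $1$-Lipschitz function of the tuple $(\phi(X_t))_t$, each constrained to a set of diameter $\sqrt{2\tilde g}$, so Talagrand-type (or self-bounding/entropy-method) concentration gives fluctuations of order $\sqrt{\tilde g\log(1/\delta)}$ \emph{without} the extra $\sqrt n$, and dividing by $\sqrt{n-1}$ delivers the clean one-event bound with the bare $\log(1/\delta)$ --- which is presumably how the cited reference obtains the stated constant. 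Your decomposition buys elementarity (only scalar Bernstein and a Hilbert-space Hoeffding inequality) at the price of constants.
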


\end{document}